\documentclass[11pt]{article}
\usepackage{fullpage}
\usepackage{hyperref}       % hyperlinks
\usepackage{url}    % simple URL 
\usepackage{booktabs}       % professional-quality tables
\usepackage{nicefrac}       % compact symbols for 1/2, etc.
\usepackage{microtype}  % microtypography
\usepackage{times}   % use Times
\usepackage{xcolor,colortbl} 
\usepackage{multirow}
\usepackage{tablefootnote}
\usepackage{subcaption}

\usepackage{preamble}
\usepackage{mathnotations}
\usepackage[style=alphabetic,maxnames=12,maxbibnames=10,maxcitenames=10,maxalphanames=10,giveninits=true,doi=false,url=true]{biblatex}
\newcommand*{\citet}[1]{\AtNextCite{\AtEachCitekey{\defcounter{maxnames}{2}}} \textcite{#1}}

\newcommand*{\citep}[1]{\cite{#1}}

\begin{document}
\title{When Determinants Are Not Enough: Private Rare Switching}
\author {
     Xingyu Zhou\thanks{Wayne State University, Detroit, USA.  Email: \texttt{xingyu.zhou@wayne.edu}}
}

\date{}

\maketitle

\begin{abstract}
In this note, I would like to share a small research moment where Codex helped me find the right way to adapt rare switching to the private setting. The standard determinant-based update rule in linear bandits and RL works beautifully because the design matrix grows monotonically. But once Gaussian noise is added for privacy, this monotonicity can fail, and the usual analysis no longer goes through. The key reason is that determinant growth controls volume, while regret analysis needs control of the worst direction. To address this, Codex comes up with a  different rare-switching rule based on the generalized Rayleigh quotient, which restores logarithmic policy updates and the desired confidence-width comparison up to a constant  factor. I present my manually clean-up version of the proof here as well as some personal reflection on this example.
\end{abstract}

% \newpage
% \tableofcontents
% \newpage

\section{The Problem}
The problem is a variant of the standard rare switching policy update in linear bandit and RL. Let us first recall the standard setting, i.e., Sec. 5.1 in~\cite{abbasi2011improved}. The learner maintains a positive-definite covariance matrix $V_t = \lambda I + \sum_{s=1}^t x_s x_s^{\top}$ for $t \in [T], \lambda >0$ and only updates its policy when $\det(V_t) > \alpha \det(V_{\tau})$ for some $\alpha >1$ with $\tau$ being the latest update time slot before $t$. 
This simple rule is often called \emph{rare-switching update} which can (i) reduce the number of total policy updates to $O(\log T)$ (which directly follows from the standard elliptical potential lemma, Lemma 11 in~\cite{abbasi2011improved}) while (ii) only suffers a constant factor blow up in regret (which directly follows from Lemma 12 in~\cite{abbasi2011improved}). In particular, Lemma 12 in~\cite{abbasi2011improved} implies that if \emph{$V_t \succeq V_{\tau}$}, then we have for any $x \in \Real^d$, we have $\norm{x}_{V_{\tau}^{-1}} \le \sqrt{\alpha} \norm{x}_{V_t^{-1}}$.

Now, what if the covariance matrix is perturbed, say, due to privacy protection. For example, in private linear contextual bandits or linear RL, Gaussian noisy matrix is often added to the non-private to arrive at the private one: $\widetilde{V}_t = V_t + E_t$, where $E_t$ is symmetric matrix with elements sampled from Gaussian. A natural question to ask is: Can we still apply the above update rule (on $\widetilde{V_t}$) along with its analysis to arrive at the same performance, i.e., log-order updates with the same-order of regret?

The answer is no. The reason is that to apply Lemma 12, we have to guarantee the monotonicity property, which is $\widetilde{V}_t \succeq \widetilde{V}_{\tau}$. This is no longer true due to the private noise. This issue has been pointed by my previous work~\cite{chowdhury2022shuffle} (cf. Sec. 6) and also by my most recent work~\cite{he2026towards} (cf. Appendix C). A similar issue but in a different context was also identified in my previous work~\cite{zhou2024differentially} (cf. Appendix C.3).

So, we may ask: \emph{how about a new rare-switching rule for the private case}\footnote{Of course, one may ask if we can keep the same update rule but with a more advanced analysis to maintain the same performance guarantees. It turns out that this is not possible, which will become clear at the end of this post.}?

\section{The New Rule}

I basically present the above write-up as a prompt to Codex 5.5 (Extra High) with all related papers in the folder. The first attempt of Codex gives me the correct answer, which is summarized in the following theorem\footnote{I clean up and re-organize the original proof of Codex, which is correct but not that easy to read.}. 

\begin{theorem}
    Let $x_1,\ldots, x_T \in \Real^d$ satisfy $\norm{x_t}_2 \le L$ for all $t \in [T]$. Let the non-private design matrices be
    \begin{align*}
        V_1 := \lambda I, V_{t+1}:= V_t + x_t x_t^{\top}.
    \end{align*}
    Assume the private matrices be $\widetilde{V}_t = V_t  + E_t$ where $E_t$ is a symmetric matrix such that for all $t \in [T]$\footnote{For Gaussian matrix, this condition holds with high probability}
    \begin{align*}
        \norm{E_t}_2 \le \eta \text{ and } 0\le 2 \eta  \le \lambda.
    \end{align*}

    Suppose that the update rule is that for any $t \in [T]$, it updates the policy when 
    \begin{align*}
        \lambda_{\max}\left(\widetilde{V}_{\tau_t}^{-1/2} \widetilde{V}_t \widetilde{V}_{\tau_t}^{-1/2}\right) > \alpha,
    \end{align*}
    where $\tau_t$ is again the most recent update before $t$.

    Then, let $\rho:= \eta /\lambda$ and $c_{\rho}:= \frac{1+\rho}{1-\rho}$, we have 
    \begin{enumerate}[(i)]
        \item For any $x \in \Real^d$ and $t \in [T]$, $\norm{x}_{\widetilde{V}_{\tau_t}^{-1}} \le \sqrt{\alpha} \norm{x}_{\widetilde{V}_t^{-1}}$ and $\norm{x}_{V_{\tau_t}^{-1}} \le \sqrt{c_{\rho} \alpha} \norm{x}_{V_t^{-1}}$;
        \item The total number of updates $m$ is upper bounded by $m \lesssim d \log_{\alpha/c_{\rho}} \left(1 + \frac{T L^2}{\lambda d}\right) = O_d(\log T)$.
    \end{enumerate}
\end{theorem}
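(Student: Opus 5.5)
Part (i) rests on the fact that the update rule gives a Loewner comparison directly. If no update is triggered at $t$, then $\lambda_{\max}(\widetilde{V}_{\tau_t}^{-1/2}\widetilde{V}_t\widetilde{V}_{\tau_t}^{-1/2})\le\alpha$, which means $\widetilde{V}_t\preceq\alpha\widetilde{V}_{\tau_t}$. If an update is triggered, $\tau_t=t$ and the claim is trivial. Both private matrices are positive definite, since $\widetilde V_s\succeq(\lambda-\eta)I\succeq\frac{\lambda}{2}I$. Operator monotonicity of inversion on positive definite matrices then gives $\widetilde{V}_{\tau_t}^{-1}\preceq\alpha\widetilde{V}_t^{-1}$, and taking square roots of quadratic forms yields the first inequality.

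For the non-private comparison I will use the sandwich
\[
(1-\rho)V_s\preceq V_s-\eta I\preceq\widetilde V_s\preceq V_s+\eta I\preceq(1+\rho)V_s,
\]
which holds because $V_s\succeq\lambda I$ implies $\eta I\preceq\rho V_s$. Chaining the pieces gives $V_t\preceq\frac{1}{1-\rho}\widetilde V_t\preceq\frac{\alpha}{1-\rho}\widetilde V_{\tau_t}\preceq\alpha c_\rho V_{\tau_t}$. Inverting as before gives $\norm{x}_{V_{\tau_t}^{-1}}\le\sqrt{c_\rho\alpha}\norm{x}_{V_t^{-1}}$.

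For part (ii), let $t_1<t_2<\dots<t_m$ be the update times. At an update $t_{k+1}$ with previous update $t_k$, we have $\lambda_{\max}(\widetilde V_{t_k}^{-1/2}\widetilde V_{t_{k+1}}\widetilde V_{t_k}^{-1/2})>\alpha$. Using the sandwich, this transfers to the non-private matrices:
\[
\lambda_{\max}(V_{t_k}^{-1/2}V_{t_{k+1}}V_{t_k}^{-1/2})>\alpha/c_\rho.
\]
To check this, take a unit direction $u$ with $u^\top\widetilde V_{t_{k+1}}u>\alpha\,u^\top\widetilde V_{t_k}u$. Then $(1+\rho)u^\top V_{t_{k+1}}u>\alpha(1-\rho)u^\top V_{t_k}u$, and the Rayleigh-quotient characterization of $\lambda_{\max}$ gives the bound. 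The vacuous case $\alpha\le c_\rho$ is excluded implicitly by the base of the logarithm.

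The genuine V_t are monotone, $V_{t_{k+1}}\succeq V_{t_k}$, so $M_k:=V_{t_k}^{-1/2}V_{t_{k+1}}V_{t_k}^{-1/2}\succeq I$. Every eigenvalue of $M_k$ is therefore at least $1$ and the largest exceeds $\alpha/c_\rho$. Hence $\det V_{t_{k+1}}/\det V_{t_k}=\det M_k>\alpha/c_\rho$. This converts the worst-direction trigger back into volume growth, and it is exactly where monotonicity of the true matrices is used. I expect this step to be the main obstacle.

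Telescoping over the updates gives $(\alpha/c_\rho)^{m-1}<\det V_{T+1}/\det(\lambda I)$. By AM–GM and the trace bound $\mathrm{tr}(V_{T+1})\le d\lambda+TL^2$, the right-hand side is at most $(1+TL^2/(\lambda d))^d$. Taking $\log_{\alpha/c_\rho}$ gives $m\le 1+d\log_{\alpha/c_\rho}(1+TL^2/(\lambda d))$, which is the claimed $O_d(\log T)$.
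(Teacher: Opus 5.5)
Your proposal is correct and follows essentially the same route as the paper. The sandwich $(1-\rho)V_s \preceq \widetilde V_s \preceq (1+\rho)V_s$ transfers the Rayleigh-quotient trigger between private and non-private matrices, and monotonicity of the true $V_t$ turns the worst-direction trigger into determinant growth $\det V_{t_{k+1}}/\det V_{t_k} > \alpha/c_\rho$. The paper packages the transfer as the two-sided comparison $c_\rho^{-1} g_{a,b}\le \widetilde g_{a,b}\le c_\rho g_{a,b}$, whereas you chain Loewner inequalities and use a witness direction, and your AM--GM/trace telescoping simply spells out the elliptical potential bound the paper cites from Lemma 11.
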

\begin{remark}
Several remarks are in order. First, 
    in most cases, $\lambda$ is set to be $2\eta$ so that the private design matrix is also positive definite, see~the proof of Lemma A.4 in \cite{chowdhury2022shuffle}. Hence, $\rho = 1/2$ and $c_{\rho} = 3$. Now, with the above update rule and guarantees, we have the right way to do rare switching in private linear bandit and RL. Second, note that we often only needs the first inequality in (i) to bound the regret. Finally, we also note that the new update rule is also equivalent to that the learner updates iff $\widetilde{V}_t \npreceq  \alpha \widetilde{V}_{\tau_t}$, which is more expensive in computation compared with the old determinant-based update rule (which can leverage the efficient rank-one update).
\end{remark}
\begin{proof}
    We first present the following claim, which will be useful for the proof with the proof being given at the end.
    \begin{claim}
    \label{clm}
        For $ 1\le a \le b \le T$, let $g_{a,b}:= \lambda_{\max}\left({V}_{a}^{-1/2} {V}_b {V}_{a}^{-1/2}\right)$ and $\widetilde{g}_{a,b}:= \lambda_{\max}\left(\widetilde{V}_{a}^{-1/2} \widetilde{V}_b \widetilde{V}_{a}^{-1/2}\right)$. Then, we have $ c_{\rho}^{-1} g_{a,b} \le \widetilde{g}_{a,b} \le c_{\rho} g_{a,b}.$
    \end{claim}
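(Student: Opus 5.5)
The plan is to prove Claim~\ref{clm} by writing each largest generalized eigenvalue as a Rayleigh-quotient ratio and then comparing $\widetilde{V}_s$ with $V_s$ in the Loewner order. For any positive-definite $A,B$ we have
\[
\lambda_{\max}\left(A^{-1/2} B A^{-1/2}\right) = \max_{x\neq 0} \frac{x^\top B x}{x^\top A x},
\]
obtained by substituting $y = A^{1/2}x$. This applies to $\widetilde{V}_a,\widetilde{V}_b$ because $2\eta\le\lambda$ makes them positive definite, as shown in the next step. Hence $g_{a,b} = \max_x \frac{x^\top V_b x}{x^\top V_a x}$ and $\widetilde{g}_{a,b} = \max_x \frac{x^\top \widetilde{V}_b x}{x^\top \widetilde{V}_a x}$.

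The key step is a multiplicative sandwich. Since $V_s \succeq \lambda I$ for every $s$ and $\norm{E_s}_2\le\eta = \rho\lambda$, we get
\[
-\rho V_s \preceq -\eta I \preceq E_s \preceq \eta I \preceq \rho V_s .
\]
Therefore
\[
(1-\rho) V_s \preceq \widetilde{V}_s \preceq (1+\rho) V_s \qquad \text{for all } s,
\]
and $1-\rho\ge 1/2>0$ gives positive definiteness. This is where the assumption $2\eta\le\lambda$ is used. It is also the only genuinely non-routine point: the perturbation is additive, but it has to be turned into a relative bound by using $\lambda I \preceq V_s$.

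With the sandwich in hand, the claim follows pointwise in $x$. For every $x\neq 0$,
\[
\frac{x^\top \widetilde{V}_b x}{x^\top \widetilde{V}_a x} \le \frac{(1+\rho)\, x^\top V_b x}{(1-\rho)\, x^\top V_a x} = c_\rho \frac{x^\top V_b x}{x^\top V_a x},
\]
and similarly
\[
\frac{x^\top \widetilde{V}_b x}{x^\top \widetilde{V}_a x} \ge c_\rho^{-1}\frac{x^\top V_b x}{x^\top V_a x}.
\]
Taking the maximum over $x$ on both sides yields $c_\rho^{-1} g_{a,b}\le \widetilde{g}_{a,b}\le c_\rho g_{a,b}$. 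The lower bound holds because the maximum of a pointwise-larger function dominates. No monotonicity between $a$ and $b$ is needed, so the ordering $a\le b$ is used only for context.
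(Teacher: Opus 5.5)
Your proposal is correct and follows essentially the same route as the paper. It uses the Loewner sandwich $(1-\rho)V_s \preceq \widetilde{V}_s \preceq (1+\rho)V_s$, derived from $-\eta I \preceq E_s \preceq \eta I$ and $\eta I \preceq \rho V_s$, then compares the Rayleigh-quotient ratios pointwise and takes the supremum. You also make explicit two things the paper leaves implicit: that $\rho \le 1/2$ makes $\widetilde{V}_a$ positive definite, so $\widetilde{V}_a^{-1/2}$ is well defined, and the lower-bound direction, which the paper dismisses as ``the same proof.''
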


    We start with (i). For any $\tau_t < t$, i.e., $t$ is not the update slot, the first inequality follows directly from the update rule. For the second one, by Claim~\ref{clm} and our update rule, we first have 
    \begin{align*}
         \lambda_{\max}\left({V}_{\tau_t}^{-1/2} {V}_t {V}_{\tau_t}^{-1/2}\right) \le c_{\rho} \alpha. 
    \end{align*}
    Further, by the identity of generalized Rayleigh quotient, we have 
    \begin{align*}
        \sup_{x \neq 0} \frac{x^{\top} V_t x}{x^{\top} V_{\tau_t} x} = \lambda_{\max}\left({V}_{\tau_t}^{-1/2} {V}_t {V}_{\tau_t}^{-1/2}\right) \le c_{\rho} \alpha.
    \end{align*}
    This implies that $V_{\tau_t}^{-1} \preceq c_{\rho} \alpha V_t^{-1}$, and hence $\norm{x}_{V_{\tau_t}^{-1}} \le \sqrt{c_{\rho} \alpha} \norm{x}_{V_t^{-1}}$.

    We now move to (ii). Consider two adjacent policy update slots $i < j$. By the update rule and Claim~\ref{clm}, we have 
    \begin{align*}
      \lambda_{\max}\left({V}_{i}^{-1/2} {V}_j {V}_{i}^{-1/2}\right) >  \alpha /c_{\rho}. 
    \end{align*}
    Further, since $V_j \succeq V_i$, all eigenvalues of ${V}_{i}^{-1/2} {V}_j {V}_{i}^{-1/2}$ are least one. Hence, by the above result and multiplicativity of determinant, we have 
  \begin{align*}
      \frac{\det(V_j)}{\det(V_i)} = \det\left({V}_{i}^{-1/2} {V}_j {V}_{i}^{-1/2}\right) > \alpha /c_{\rho}.
  \end{align*}
  Thus, by the elliptical potential lemma (cf. Lemma 11 in~\cite{abbasi2011improved}), we have that the total number of updates $m$ is upper bounded by $m \lesssim d \log_{\alpha/c_{\rho}} \left(1 + \frac{T L^2}{\lambda d}\right)$.

  We are left with the proof of Claim~\ref{clm}. To show it, we first notice that by the condition on $E_t$, we have 
  \begin{align*}
      -\eta I \preceq E_t \preceq \eta I.
  \end{align*}
  Further, by $V_t \succeq \lambda I$, hence  $\eta I \preceq (\eta/\lambda) V_t = \rho V_t$. Combing the above, yields that 
  \begin{align*}
     (1-\rho) V_t \preceq \widetilde{V_t} \preceq (1+\rho) V_t.
  \end{align*}

  Thus, for any nonzero vector $x \in \Real^d$, we have 
  \begin{align*}
      \frac{x^{\top} \widetilde{V}_b x}{x^{\top} \widetilde{V}_a x} \le \frac{(1+\rho) x^{\top} {V}_b x}{(1-\rho) x^{\top} {V}_a x} = c_{\rho} \frac{ x^{\top} {V}_b x}{x^{\top} {V}_a x}.
  \end{align*}
  Taking sup and applying identity of generalized Rayleigh quotient, yields that $\widetilde{g}_{a,b} \le c_{\rho} g_{a,b}$. The other direction follows the same proof. 
\end{proof}

\section{The Connection}
One may wonder what's the difference or connection between the new update rule and the standard one. To see this, it is instructive to see the behavior of the new update rule even in the non-private case and ask whether it leads to a more frequent or less frequent update. To this end, I directly prompt Codex (also with ChatGPT) with the the screenshot of Lemma 12 in~\cite{abbasi2011improved}. It then gives me a new proof which actually sheds a lot of light on how Codex comes up with the new update rule and its connection with the old one.

The new proof is much simpler (at least to me) compared with the original proof. Basically, the proof first notes that LHS in Lemma 12 is equal to 
\begin{align}
\label{eq:new}
    \sup_{x \neq 0} \frac{x^{\top}A x}{x^{\top} B x} =  \lambda_{\max}(B^{-1/2} A B^{-1/2}),
\end{align}
where again we apply identity of generalized Rayleigh quotient, while the RHS in Lemma 12 is equal to 
\begin{align}
\label{eq:old}
    \frac{\det(A)}{\det(B)} = \det(B^{-1/2} A B^{-1/2}).
\end{align}
Note that~\eqref{eq:new} $\le$~\eqref{eq:old} by the fact that all eigenvalues of  $B^{-1/2} A B^{-1/2}$ are at least one, which comes from the monotonicity that $B \preceq A$. This proves Lemma 12.

So, essentially, the new update rule comes from the new proof of Lemma 12, i.e.,~\eqref{eq:new}. Moreover, we can see that under the new update rule, the learner would update less frequent as $\eqref{eq:new} > \alpha$ implies $\eqref{eq:old} > \alpha$, but not the other direction.

This new proof also explains why the new update rule can get rid of the issue in the old rule when lacking of monotonicity. This is because using the old update rule (cf.~\eqref{eq:old}), one has to further leverage the monotonicity to arrive at the bound on $ \sup_{x \neq 0} \frac{x^{\top}A x}{x^{\top} B x}$, i.e., the worst-case direction. On the other hand, the new rule directly works with $ \sup_{x \neq 0} \frac{x^{\top}A x}{x^{\top} B x}$.

We can now also see that with the old update rule, it is impossible to control the worst case direction without monotonicity since the eigenvalues on other directions can be much smaller and hence a very large value on the worst-case direction can still make the determinant small.

\section{The Reflection}
% Codex offers a new rare-switching rule in the private case that preserves the nice properties. After a closer inspection, this new rule can be traced to an alternative proof of a key lemma. The techniques behind the proof are definitely not new. However, as stated in Sebastien Bubeck 's post \emph{``yet at the same time it is also true that the model didn’t “invent” any “new mathematics” (say it didn’t invent some alternative class field theory, whatever that would mean).  But this is the crucial point: merely being able to know deeply all the results in a scientific field, and being able to use all known arguments expertly and with just the right choice of parameters, that alone can lead to a ton of breakthroughs, and this is not just limited to mathematics''}, I think the above example shares the same spirit. 

Codex suggests a new rare-switching rule for the private case that preserves the desired guarantees. Upon closer inspection, the rule can be traced back to an alternative proof of a key lemma. The ingredients in the proof are not new, but the useful part is the way they are selected, combined, and applied to the right object. In this sense, this mini example somewhat the same spirit as the unit distance problem.

This reminds me of several things. First, for me, the most enjoyable part of doing research is not necessarily its impact, nor the final breakthrough result. Rather, it is the moment when I understand something a little more deeply\footnote{Of course, this is a privilege thing. Moreover, it can also lead to an embarrassing outcome where there is neither impact nor deep understanding. You can name me here.}. In this sense, research carries its own intrinsic reward. One thing I still remember from high school and college is that I often went to my teachers or professors to ask clarification questions, or to explain my own understanding and hope they could verify it.\footnote{Using today's fancy language, I was trying to build and calibrate my own world model.} Now, with AI, students may have a much more convenient way to test, refine, and verify their own understanding (if they want...).

Second, I tend to view deep understanding of a subject as learning a good representation. A good representation compresses many details into the right concepts, and because of that, it can be flexible and useful when adapted to new problems. 
In many cases, what is needed to make a significant impact, either in theory or in practice, is not necessarily a completely new theory or method. Rather, it may be something that grows naturally out of a better compression or representation of existing knowledge, one that identifies the ``core set'' of ideas from which a much larger unknown space can be explored.

So, with today's AI tools, humans may become more efficient at maximizing this kind of intrinsic reward: understanding more, clarifying faster, and building better representations. Perhaps we should also think about how to steer AI systems, either during training or through test-time scaling, toward a similar objective: not merely producing answers, but actively seeking good representations, useful abstractions, and core sets of ideas. To me, this connects naturally to the fundamental problem of ``exploration'' in reinforcement learning, which is currently my biggest interest.

\newpage
\printbibliography

@inproceedings{zhou2024differentially,
  title={On differentially private federated linear contextual bandits},
  author={Zhou, Xingyu and Ray Chowdhury, Sayak},
  booktitle={International Conference on Learning Representations},
  volume={2024},
  pages={30101--30131},
  year={2024}
}

@article{he2026towards,
  title={Towards Differentially Private Reinforcement Learning with General Function Approximation},
  author={He, Yi and Zhou, Xingyu},
  journal={arXiv preprint arXiv:2605.07049},
  year={2026}
}

@inproceedings{chowdhury2022shuffle,
  title={Shuffle Private Linear Contextual Bandits},
  author={Chowdhury, Sayak Ray and Zhou, Xingyu},
  booktitle={International Conference on Machine Learning},
  pages={3984--4009},
  year={2022},
  organization={PMLR}
}

@article{abbasi2011improved,
  title={Improved algorithms for linear stochastic bandits},
  author={Abbasi-Yadkori, Yasin and P{\'a}l, D{\'a}vid and Szepesv{\'a}ri, Csaba},
  journal={Advances in neural information processing systems},
  volume={24},
  year={2011}
}
\newpage

\appendix
\end{document}